\journal{https://arxiv.org}
\newcommand{\commentout}[1]{}
\newtheorem{theorem}{Theorem}
\newtheorem{example}{Example}
\newtheorem{propn}{Proposition}
\newtheorem{definition}{Definition}
\newenvironment{proof}{\noindent \textbf{Proof:}}{\hfill  $\boxempty$\\}
\newcommand{\be}{\begin{enumerate}} 
\newcommand{\ee}{\end{enumerate}}
\newcommand{\Ags}{Agt}
\newcommand{\Prop}{Prop}
\newcommand{\powerset}[1]{{\cal P}(#1) }
\newcommand{\until}{\,U\,}
\newcommand{\buchi}{B\"uchi}
\newcommand{\Acts}{\mathit{Acts}} 
\newcommand{\Fairness}{{\cal F}}
\newcommand{\nat}{\mathbb{N}}
\newcommand{\M}{K}
\newcommand{\subC}{B} 
\begin{document}
\begin{frontmatter}

\title{
Normative Multiagent Systems: A {\em Dynamic} Generalization
}

\author{Xiaowei Huang$^{1,2}$, Ji Ruan$^3$, Qingliang Chen$^1$, Kaile Su$^{1,4}$\\
$^1$Department of Computer Science, Jinan University, China\\
$^2$Department of Computer Science, University of Oxford, United Kingdom\\
$^3$School of Engineering, Computer and Mathematical Sciences, Auckland University of Technology, New Zealand\\
$^4$Institute for Integrated and Intelligent Systems, Griffith University, Brisbane, Australia}


\begin{abstract}
Social norms are powerful formalism in coordinating autonomous agents' behaviour to achieve certain objectives. In this paper, we propose a dynamic normative system to enable the reasoning of the changes of norms under different circumstances, which cannot be done in the existing static normative systems. 
We study two important problems (norm synthesis and norm recognition) related to the autonomy of the entire system and the agents, and characterise the computational complexities of solving these problems. 
\end{abstract}

\end{frontmatter}


\section{Introduction}

Multiagent systems have been used to model and analyse distributed and heterogeneous systems, with agents being suitable for  modelling  software processes and physical resources. 
%
%
%
Roughly speaking, autonomy means that the system by itself, or the agents in the system, can decide for themselves what to do and when to do it~\cite{FDW2013}. 
To facilitate autonomous behaviours, agents are provided with capabilities, e.g., to gather information by making observations (via e.g., sensors) and communicating with each other (via e.g., wireless network), to affect the environment and other agents by taking actions, etc. Moreover,  systems and agents may have specific objectives to pursue. 
In this paper, we study autonomy issues related to  social norms~\cite{ST1992}, which are powerful formalism for the coordination of agents, by restricting their behaviour to prevent destructive interactions from taking place, or to facilitate positive interactions~\cite{vdHRW2007,AW2010}. 
%
%
%

%
Existing normative systems~\cite{WvdH2005,AvdHW2007,CR2009,AW2010,MSRWV2013} impose restriction rules on the multiagent systems to disallow agents' actions based on the evaluation of  current system state. An implicit assumption behind this setting is that the normative systems do not have (normative) states to describe different social norms under different circumstances. That is, they are {\it static} normative systems. Specifically, if an action is disallowed on some system state then it will remain disallowed when the same system state occurs again. However, more realistically, social norms may be subject to changes. For example, a human society has different social norms in peacetime and wartime, and an autonomous multiagent system may have different social norms when exposed to different levels of cyber-attacks. This motivates us to propose a new definition of normative systems (in Section \ref{sec:normsys}), to enable the representation of norms under multiple states (hence, {\em dynamic} normative systems). With a running example, we show that a dynamic normative system can be a necessity if a multiagent system wants to implement certain objectives. 




We focus on two related autonomy issues\footnote{In this paper, we consider decision problems of these autonomy issues. The algorithms for the upper bounds in Theorem~\ref{thm:synthesis}, \ref{thm:NC_1}, and \ref{thm:NC_2} can be adapted to implement their related autonomy.}.
The first is  norm synthesis, which is to determine the existence of a normative system for the achievement of objectives. The success of this problem suggests the autonomy of the multiagent system with respect to the objectives, i.e., if all agents in the system choose to conform to the normative system\footnote{The synthesised normative system is a common knowledge~\cite{FHMVbook} to the agents. }, the objectives can be achieved.  For static normative systems, norm synthesis problem is shown to be NP-complete~\cite{ST1995}. For our new, and more general, definition of  normative systems, we show that it is EXPTIME-complete. This encouraging decidable result shows that the maximum number of normative states can be bounded. 

The second is norm recognition, which can be seen as a successive step after deploying an autonomous multiagent systems (e.g., by norm synthesis). For deployed  systems such as~\cite{CGKLOPRT2001}, it can be essential to allow new agents to join anytime. If so, it is generally expected that the new agent is able to recognise the current social norms after  playing in the system for a while. 
Under this general description, we consider two subproblems 
related to the autonomy of the system and the new agent, respectively. The first one, whose complexity is in PTIME, tests whether the system, under the normative system, can be autonomous in ensuring that the new agent can eventually recognise the norms, no matter how it plays. If such a level of autonomy is unachievable, we may consider the second subproblem, whose success suggests that if the new agent is autonomous (in moving in a smart way) then it can eventually recognise the norms. 
We show that the second subproblem is   PSPACE-complete.

\section{Partial Observation Multiagent Systems}\label{sec:pomas}

A multiagent system consists of a set of agents running in an environment~\cite{FHMVbook}. At each time, every agent takes a local action independently, and  the environment updates its state according to  agents' joint action. 
%
We assume that agents have only partial observations over the system states, 
because in most real-world systems, agents either do not have the capability of observing all the information (e.g., an autonomous car on the road can only observe those cars in the surrounding area by its sensors or cameras, etc) or are not supposed to observe private information of other agents (e.g., a car cannot observe the destinations of other cars, etc).

Let $\Ags$ be a finite set of agents and $\Prop$ be a finite set of atomic propositions. A finite  multiagent system 
is a tuple  $M=(S,\{Act_i\}_{i\in\Ags},\{L_i\}_{i\in\Ags},\{O_i\}_{i\in\Ags},I,T,\pi)$, where 
$S$ is a finite set of environment states, 
$Act_i$ is a finite set of local actions of agent $i\in \Ags$ such that $Act = Act_1 \times ...\times Act_n$ is a set of joint actions, 
$L_i:S\rightarrow \powerset{Act_i}\setminus \{\emptyset\}$ provides on every state a nonempty set of local actions that are available to agent $i$, 
$I\subseteq S$ is a nonempty set of initial states,
$T\subseteq S\times Act\times S$ is a transition relation such that for all $s\in S$ and $a\in Act$ there exists a state $s'$ such that $(s,a,s')\in T$,
$O_i:S\rightarrow \mathcal{O}$ is an observation function for each agent $i\in Agt$ such that $\mathcal{O}$ is a set of possible observations,  and
$\pi:S\rightarrow \mathcal{P}(\Prop)$ is an interpretation of the atomic propositions $\Prop$  at the states. 
%
We require that for all states $s_1,s_2\in S$ and $i\in\Ags$, $O_i(s_1)=O_i(s_2)$ implies $L_i(s_1)=L_i(s_2)$, i.e., an agent can distinguish two states with different sets of next available actions.  
%
%

\begin{example}\label{example:model}

We consider 
a business system 
with two sets of autonomous agents: the producer agents $P=\{p_1,...,p_n \}$, and consumer agents $C=\{c_1,...,c_m\}$. Let $\Ags=P\cup C$. Each producer agent $p_j\in P$ produces a specific kind of goods with limited quantity each time.
There can be more than one agents producing the same goods. 
We use $g_j\in G$ to denote the kind of goods that are produced by agent $p_j$, and $b_j \in \nat$ to denote the number of goods that can be produced at a time. 
Every consumer agent $c_i\in C$ 
has a designated job which needs a set of goods to complete. It is possible that more than one goods of a kind are needed. We use $r_i$ to denote the multiset of goods that are required by agent $c_i$. 


We use $rr_i\subseteq r_i$ to denote the multiset of remaining goods to be collected for $c_i$, $d_i \in G' = G\cup \{\bot\}$ to represent $c_i$'s current demand, and $t_i\in P' = P\cup \{\bot\}$ to represent the producer agent from whom $c_i$ is currently requesting goods. Every interaction of  agents occurs in two consecutive rounds, and we use $k\in \{1,2\}$ to denote the current round number. 

Because $g_j,b_j,r_i$ do not change their values in a system execution, we assume that they are fixed inputs of the system. The multiagent system $M$ has the state space as 
$$
S=\{1,2\}\times \Pi_{i\in \{1,...,m\}} \{(rr_i, d_i,t_i)~|~rr_i\subseteq r_i, d_i\in G', t_i\in P'\}
$$
where the first component $\{1,2\}$ is for the 
 round number.
The initial states are $I=\{1\}\times \Pi_{i\in \{1,...,m\}} \{(\emptyset, \bot,\bot)\}$.




The consumer agent $c_i$ has a set of  actions 
$Act_{c_i}=\{a_\bot\}\cup \{a_{p_j}~|~p_j\in P\}.$
Intuitively, $a_\bot$ means that an agent does nothing,
 and the action $a_{p_j}$ means that agent $c_i$ sends a request to producer $p_j$ for its goods. 
The producer agent $p_j$ has a set of  actions 
$Act_{p_j}=\{a_\bot\}\cup \{a_{\subC}~|~\subC\subseteq C, |\subC| \leq b_j\}$.
Intuitively, the action $a_{\subC}$ for $\subC$ a subset of agents represents that agent $p_j$ satisfies the requests from agents in $\subC$.

We use pseudocode to describe the transition relation. In the first round, i.e., $k=1$, it can be described as follows.  
\begin{enumerate}
\item[R1a.] all consumer agents $c_i$ do the following sequential steps: 
\begin{enumerate}
\item if $rr_i=\emptyset$ then we let $rr_i=r_i$. Intuitively, this represents that agent $c_i$'s job is repeated.
\item if $d_i=\bot$ then do the following: let $d_i\in rr_i$,
choose an agent $p_j$ such that $d_i=g_j$, and 
let $t_i=p_j$. Intuitively, 
if there is no current demand, then a new demand $d_i\in rr_i$ is generated, and $c_i$ sends a request to a producer agent $p_j$ who is producing goods $d_i$. 
\end{enumerate}
\item[R1b.] all producer agents $p_j$ execute action $a_\bot$, and let $k=2$. 
\end{enumerate}
In the second round, i.e., $k=2$, it can be described as follows. 
\begin{enumerate}
\item[R2a.] all producer agents $p_j$ do the following sequential steps: 
\begin{enumerate}
\item select a maximal subset $\subC$ of agents such that $\subC \subseteq \{c_i~|~t_i=p_j\}$ and $|\subC|\leq b_j$. Intuitively, from the existing requests, 
the producer agent $p_j$ selects a set  of  them according to its production capability. 
\item for all agents $c_i$ in $\subC$,  let $rr_i=rr_i\setminus \{g_j\}$ and $d_i=t_i=\bot$. Intuitively, if a demand $d_i$ is satisfied, then it is removed from $rr_i$ and we let $d_i=t_i=\bot$. 
\end{enumerate}
\item[R2b.] all consumer agents execute action $a_\bot$, and let $k=1$. 
\end{enumerate}%
%
%
%

We use ``$var=val$", for $var$ a variable and $val$ one of its values, to denote an atomic proposition. Then the labelling function $\pi$ can be defined naturally over the states. The observation $O_i$ will be discussed in Section \ref{sec:recognition}.

We provide a simple instantiation of the system\footnote{The instantiation is simply to ease the understanding of the definitions in Example~\ref{example:model} and \ref{example:dynamic}. The conclusions for the example system (i.e., Proposition~\ref{thm:insufficient}, \ref{thm:sufficient}, \ref{thm:failed}, \ref{thm:holds}, \ref{thm:nc3}) are based on the general definition. }. Let $n=2$, $G=\{g_1,g_2\}$, $b_1=b_2=1$ (two agents produce  goods one at each time), $m=3$, $r_1=\{g_1\}$, $r_2=\{g_2\}$ and $r_3=\{g_1, g_2\}$ (three consumers with the required goods). From the initial state  $s_0 = (1, 
(\emptyset, \bot, \bot), (\emptyset, \bot, \bot), (\emptyset, \bot, \bot))
$,  
we may have the following two  states such that $(s_0,(a_\bot,a_\bot,a_{p_1},a_{p_2},a_{p_1}),s_2')\in T$ and $(s_2',(a_{\{c_1\}},a_{\{c_2\}},a_\bot,a_\bot,a_\bot),s_1')\in T$: 

$
\begin{cases}
s_2' = (2,
( \{g_1\}, g_1, p_1), (\{g_2\}, g_2, p_2), (\{g_1,g_2\}, g_1, p_1)), and\\
s_1' = (1,
( \{\}, \bot, \bot), (\{\}, \bot, \bot), (\{g_1,g_2\}, g_1, p_1))
\end{cases}
$




\end{example}

\section{Dynamic Normative Systems}\label{sec:normsys}

The following is our new definition of normative systems. 
\begin{definition}
A dynamic normative system of a multiagent system\\ $M=(S,\{Act_i\}_{i\in\Ags},\{L_i\}_{i\in\Ags},\{O_i\}_{i\in\Ags},I,T,\pi)$ is a tuple $N_M=(Q,\delta_n,\delta_u,q_0)$ such that 
$Q$ is a set of normative states, 
$\delta_n: S\times Q  \rightarrow \powerset{Act}$ is a function specifying, for each environment state and each normative state, a set of joint actions that are disallowed, 
$\delta_u: Q \times S \rightarrow Q$ is a function specifying the update of normative states according to the changes of environment states, and 
$q_0$ is the initial normative state. 
\end{definition}

A (static) normative system in the literature can be seen as a special case of our definition where the only normative state is $q_0$.  
In such case, we have $Q=\{q_0\}$, $\delta_u(q_0,s)=q_0$ for all $s\in S$, and can therefore write the function $\delta_n$ as function $\delta: S \rightarrow \powerset{Act}$.
It is required that the function $\delta_n$ (and thus $\delta$) does not completely eliminate agents' joint actions, i.e., $\delta_n(s,q)\subset \Pi_{i\in\Ags}L_i(s)$ for all $s\in S$ and $q\in Q$. 


We give two dynamic normative systems. 
\begin{example}\label{example:dynamic} Let 
$M$
be the multiagent system given in Example~\ref{example:model}. Let $s_1$ and $s_2$ range over those environmental states such that $k=1$ and $k=2$, respectively. 

The normative system $N_M^1=(Q^1,\delta_n^1,\delta_u^1,q_0^1)$ is such that: 
\begin{itemize}
\item $Q^1=\Pi_{p_j\in P}\{1,...,m\}$, where each producer maintains a number indicating the consumer whose requirement must be 
satisfied in this normative state,  

%
\item $\delta_n^1(s_1,q)=\emptyset$, i.e., no joint actions are disallowed on $s_1$, and
$(a_{\subC_1},...,a_{\subC_n},a_\bot,...,a_\bot)\in\delta_n^1(s_2,(y_1,...,y_n))$ if there exists  $j\in \{1,...,n\}$ such that  $\subC_j\subseteq C$ and $c_{y_j}\notin \subC_j$. 
Intuitively, for producer agent $p_j$, an action $a_{\subC_j}$ is disallowed on the second round if $\subC_j$ does not contain the consumer $c_{y_j}$ who is needed to be satisfied in this round. 


\item $\delta_u^1(q,s_2)=q$ and $\delta_u^1((y_1,...,y_n),s_1)=(y_1',...,y_n')$ such that $y_j'=(y_j\mod m)+1$ for $j\in \{1,...,n\}$; intuitively, the normative state increments by 1 and loops forever. 
\item $q_0^1 = (1,...,n) $
, i.e., 
 producer agents $p_j$ start from $c_j$. 
\end{itemize}
For the instantiation in Example~\ref{example:model}, we have that 
\begin{itemize} 
\item $Q^1=\{1,2,3\}\times \{1,2,3\}$, $q_0^1 = (1,2)$, 
\item 
$(a_{\subC_1}, a_{\subC_2}, a_{\bot}, a_{\bot}, a_{\bot})\in 
\delta_n^1(s_2',(1,2))$ if either $\subC_1\in \{\emptyset,\{c_2\},\{c_3\},\{c_2,c_3\}\}$ or $\subC_2\in \{\emptyset,\{c_1\},\{c_3\},\{c_1,c_3\}\}$, 
\item $\delta_u^1((1,2), s_1)= (2,3)$, $\delta_u^1((2,3), s_1)= (3,1)$.  
\end{itemize}

We define  another normative system $N_M^2=(Q^2,\delta_n^2,\delta_u^2,q_0^2)$ by extending the number maintained by each producer into a first-in-first-out queue so that the ordering between consumers who have sent the requests matters. That is, we have $Q^2=\Pi_{p_j\in P}(\{\epsilon\}\cup\{ i_1...i_k~|~k\in \{1,...,m\}, i_x\in C \text{ for } 1\leq x\leq k\})$ where the symbol $\epsilon$ denotes an empty queue, and $q_0^2=\Pi_{p_j\in P}\{\epsilon\}$ which means that producers start from empty queues. The functions $\delta_n^2$ and $\delta_u^2$ can be  adapted from $N^1_M$, and details are omitted here. 


\end{example}


%
The following captures the result of applying a normative system on a multiagent system, which is essentially a product of these two systems.

\begin{definition} Let $M$ be a multiagent system and $N_M$ a normative system on $M$, the result of applying $N_M$ on $M$ is a Kripke structure $\M(N_M)=(S^\dagger,I^\dagger,T^\dagger,\pi^\dagger)$ such that 
\begin{itemize}
\item $S^\dagger=S\times Q$ is a set of states,
\item $I^\dagger = I \times \{q_0\}$ is a set of initial states,
\item $T^\dagger\subseteq S^\dagger\times S^\dagger$ is  such that, for any two states $(s_1,q_1)$ and $(s_2,q_2)$, we have $((s_1,q_1),(s_2,q_2))\in T^\dagger$ if and only if, 
(1) there exists an action $a\in Act$ such that $(s_1,a,s_2)\in T$ and $a\notin \delta_n(s_1,q_1)$, and 
(2) $q_2=\delta_u(q_1,s_2)$. 
Intuitively, the first condition specifies the enabling condition 
to transit from state $s_1$ to state $s_2$ by taking a joint action $a$ which 
is allowed in the 
normative state $q_1$. The second condition specifies that the transition relation needs to be consistent with the changes of normative states. 
\item $\pi^\dagger:S^\dagger\rightarrow \powerset{\Prop}$ is such that $\pi^\dagger((s,q))=\pi(s)$. 
\end{itemize}
\end{definition}

%
\begin{example}\label{example:kripke} 
For the instantiation, in the structure $\M(N_M^1)$, we have \\ $((s_0,(1,2)),(s_2',(1,2))), ((s_2',(1,2)),(s_1',(2,3)))\in T^\dagger$. The latter is because \\  $(s_2',(a_{\{c_1\}},a_{\{c_2\}},a_\bot,a_\bot,a_\bot),s_1')\in T$, $\{c_1\}\notin \{\emptyset,\{c_2\},\{c_3\},\{c_2,c_3\}\}$,\\ and $\{c_2\}\not\in \{\emptyset,\{c_1\},\{c_3\},\{c_1,c_3\}\}$. 

For $a=(a_{\{c_1\}},a_{\{c_3\}},a_\bot,a_\bot,a_\bot)$ and $s_1'' = (1,
( \{\}, \bot, \bot), (\{g_2\}, g_2, p_2), (\{g_1\}, \bot, \bot))$, we have 
$(s_2',a,s_1'')\in T$ but $((s_2',(1,2)),(s_1'',(2,3)))\not\in T^\dagger$. This is because, for  $p_2$, it is required to make $c_2$ as its current priority according to the normative state, and cannot choose to satisfy $c_3$ instead.
\end{example}

We remark that, the normative system, as many current formalisms, imposes hard constraints on the agents' behaviour. As stated in e.g., \cite{BvdTV2006}, social norms may be soft constraints that agents can choose to comply with or not. To accommodate soft social norms, we can redefine the function $\delta_n$ as $\delta_n: S\times Q  \times Act \rightarrow U$ to assign each joint action a cost utility for every agent, on each environment state and normative state. With this definition, norms become soft constraints: agents can choose to take destructive actions, but are encouraged to avoid them due to their high costs. 
The objective language to be introduced in the next section also needs to be upgraded accordingly to express properties related to the utilities. We leave such an extension as a future work. 


\section{Objective Language}

To specify agents' and the system's objectives, we use temporal logic CTL~\cite{clarkebook}
whose syntax is as follows. 
$$\phi~::=~p~|~\neg \phi~|~\phi_1\lor\phi_2~|~EX\phi~|~E(\phi_1\until\phi_2)~|~EG\phi$$
where $p\in\Prop$.
%
Intuitively, formula $EX\phi$ expresses that $\phi$ holds at some next state, $E(\phi_1\until\phi_2)$ expresses that on some path from current state, $\phi_1$ holds until $\phi_2$ becomes true, and $EG\phi$ expresses that on some path from current state, $\phi$ always holds. Other operators can be obtained as usual, e.g., 
$EF\phi\equiv E(True\until\phi)$, $AG\phi\equiv\neg E(True \until \neg \phi)$, $AF\phi=\neg EG \neg \phi$ etc.

A path in a Kripke structure $\M(N_M)$ is a sequence $s_0s_1...$ of states such that $(s_i,s_{i+1})\in T^\dagger$ for all $i\geq 0$. 
The  semantics of the language is given by a relation $\M(N_M),s\models \phi$ for $s\in S^\dagger$, 
which is defined inductively as follows~\cite{clarkebook}: 
\be 
\item $\M(N_M),s \models p$ if $p\in \pi^\dagger(s)$, 
\item $\M(N_M),s \models \neg \phi $ if  not  $\M(N_M), s\models  \phi$, 
\item $\M(N_M),s \models \phi_1 \lor \phi_2 $ if  $\M(N_M), s\models  \phi_1$ or  $\M(N_M), s\models  \phi_2$,
\item $\M(N_M),s \models EX \phi$ if  there exists a state $s'\in S^\dagger$ such that $(s,s')\in T^\dagger$ and  $\M(N_M), s'\models  \phi$, 
\item $\M(N_M),s \models E(\phi_1 \until \phi_2)$ if  there exists a  
path $s_0s_1\ldots $ and a number $n\geq 0$
such that $s_0=s$, $\M(N_M), s_k\models \phi_1$  for $0\leq k \leq n-1$ and  $\M(N_M), s_n\models  \phi_2$, 
\item $\M(N_M),s \models EG \phi$ if  there exists a   
path $s_0s_1\ldots $ such that $s_0=s$ and  $\M(N_M), s_k\models  \phi$ for all $k \geq 0$. 
\ee 

The verification problem, denoted as $\M(N_M)\models \phi$, is, given a multiagent system $M$, its associated normative system $N_M$, and an objective formula $\phi$, to decide whether $\M(N_M),s\models \phi$ for all $s\in I^\dagger$. 
The norm synthesis problem is, given a  system $M$ and an objective formula $\phi$, to decide the existence of a normative system $N_M$ such that $\M(N_M)\models \phi$. The norm recognition problem will be defined in Section~\ref{sec:recognition}. 
For the measurement of the complexity, we take the standard assumption that the sizes of the multiagent system and the normative system are measured with the number of states, and the size of the objective formula is measured with the number of operators. 




\begin{example}

For the system in Example~\ref{example:model}, interesting objectives expressed in CTL may include 
$$
\phi_1\equiv \bigwedge_{i\in C}\bigwedge_{j\in P} AG~(t_i=p_j\Rightarrow EF~d_i=\bot)
$$
which says that it is always the case that if there is a request from a consumer $c_i$ to a producer $p_j$ (i.e., $t_i=p_j$), then the request is possible to be satisfied eventually (i.e., $d_i=\bot$), and 
$$
\phi_2\equiv \bigwedge_{i\in C}\bigwedge_{j\in P} AG~(t_i=p_j\Rightarrow AF~d_i=\bot)
$$
which says that it is always the case that if there is a request from a consumer $c_i$ to a producer $p_j$, then on all the paths the request will eventually be satisfied. Both $\phi_1$ and $\phi_2$  are liveness objectives that are important for an ecosystem to guarantee that no agent can be starving forever. The objective $\phi_2$ is stronger than $\phi_1$, and their usefulness is application-dependent. 
%
The following proposition shows that static normative systems are insufficient to guarantee the satisfiability of the objectives in this ecosystem. 

\begin{propn}\label{thm:insufficient}
There exists an instance of a multiagent system $M$ such that, for all static normative systems $N_M$, we have that $\M(N_M)\not\models \phi_1\land\phi_2$. 
\end{propn}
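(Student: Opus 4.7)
The plan is to exhibit a minimal, symmetric instance for which no static norm can enforce $\phi_2$; since $AF\phi \Rightarrow EF\phi$ yields $\phi_2 \Rightarrow \phi_1$, this already settles the conjunction. I take $n = 1$ with a single producer $p_1$ producing $g_1$ at capacity $b_1 = 1$, and $m = 2$ consumers with $r_1 = r_2 = \{g_1\}$. From $s_0 = (1, (\emptyset, \bot, \bot), (\emptyset, \bot, \bot))$, rule R1a fires deterministically: each $c_i$ is forced to reset $rr_i$ to $\{g_1\}$ and then to pick $d_i = g_1$, $t_i = p_1$ as its only option, so the unique one-step successor of $s_0$ is
$$s^* = (2, (\{g_1\}, g_1, p_1), (\{g_1\}, g_1, p_1)).$$

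The first ingredient I would prove is a short recurrence lemma for $s^*$. The maximality clause of R2a combined with $b_1 = 1$ restricts $p_1$'s available actions at $s^*$ to $a_{\{c_1\}}$ and $a_{\{c_2\}}$, and consumers must play $a_\bot$, so there are exactly two valid joint actions out of $s^*$. After either one, only the served consumer's triple becomes $(\emptyset, \bot, \bot)$, and the subsequent R1a step deterministically restores it to $(\{g_1\}, g_1, p_1)$ while leaving the unserved consumer's triple untouched. Hence every allowed edge out of $s^*$ returns to $s^*$ after a single intermediate $k=1$ state, and on any infinite suffix the $(t,d)$-values of the never-served consumer persist unchanged.

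With the recurrence lemma in hand I case-split on the static norm. Let $C' \subseteq \{c_1, c_2\}$ be the set of consumers $c_i$ for which the joint action serving $c_i$ at $s^*$ is permitted by $\delta$; the non-elimination requirement forces $C' \neq \emptyset$. If $|C'| = 1$, say $C' = \{c_k\}$, then $c_l$ with $l \neq k$ is never served on any path from $s^*$, so $t_l = p_1$ and $d_l = g_1 \neq \bot$ persist forever, falsifying $AF(d_l = \bot)$ at $s^*$. If $|C'| = 2$, pick any $c_k \in C'$ and take the infinite path that chooses the $c_k$-serving action at every recurrence of $s^*$; the recurrence lemma makes this a valid path in $\M(N_M)$, and $c_l$ with $l \neq k$ is again never served along it. In both cases $\phi_2$ is violated at the reachable state $s^*$, and the proposition follows. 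The main delicate point is pinning down the recurrence lemma --- specifically verifying that R2a's maximality really rules out $a_\emptyset$ for $p_1$ at $s^*$ and that R1a restores the served slot to its $s^*$-value exactly; once that is nailed, the case analysis is a few lines.
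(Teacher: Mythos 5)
Your proposal is correct and follows essentially the same route as the paper: the identical minimal instance (one producer with $b_1=1$, two consumers each requiring $\{g_1\}$), the same pivotal state $s^*=s_2$, the same three-way case split on what the static norm permits there (your $|C'|=1$ cases are the paper's $N^3_M,N^4_M$ and your $|C'|=2$ case is its $N^5_M$), and the same witnessing path $s_0(s_2s_1)^\infty$. The only difference is cosmetic --- you use $\phi_2\Rightarrow\phi_1$ to reduce everything to a $\phi_2$-violation, whereas the paper refutes $\phi_1$ in the restrictive cases and $\phi_2$ in the permissive one.
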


The proof idea
is based on the following simple case. Assume that there are one producer $p_1$, such that $b_1=1$, and two consumers $c_1$ and $c_2$, such that $r_1=r_2=\{g_1\}$. There only exist the following three static normative systems which have different  restrictions on an environment state $s_2=(2,(\{g_1\},g_1,p_1),(\{g_1\},g_1,p_1))$: (Recall that $q_0$ is the only normative state in static normative systems.)
\begin{itemize}
\item $N^3_{M}$ is such that $\delta_n^3(s_2,q_0)=\{a_{\{c_1\}}\}$, i.e., $c_1$ is not satisfied. 
\item $N^4_{M}$ is such that $\delta_n^4(s_2,q_0)=\{a_{\{c_2\}}\}$, i.e., $c_2$ is not satisfied.
\item $N^5_{M}$ is such that $\delta_n^5(s_2,q_0)=\emptyset$, i.e., no restriction is imposed. 
\end{itemize}
We can see that 
$\M(N_M^h)\not\models \phi_1 \text{    for    } h \in \{3,4\}\text{ and }\M(N^5_{M})\not\models \phi_2.$
The former  is  because one of the agents is constantly excluded from being satisfied. For the latter, there exists an infinite path 
$s_0(s_2s_1)^\infty $
such that $s_0=(1,(\emptyset,\bot,\bot),(\emptyset,\bot,\bot))$ is an initial state, $s_2$ is given as above, 
and  $s_1=(1,(\emptyset,\bot,\bot),(\{g_1\},g_1,p_1))$ is the state on which consumer $c_i^1$'s requirement is satisfied. On this path, the requirement from $c_i^2$ is never satisfied.
{\em On the other hand}, for the dynamic normative systems in Example~\ref{example:dynamic}, all the consumers' requests can be satisfied, so we have the following conclusion. 
\begin{propn}~\label{thm:sufficient}
Given a 
system $M$ and a normative system $N_M^1$ or $N_M^2$, we have that $\M(N_M^h)\models \phi_1\land \phi_2 \text{ for }h\in \{1,2\}$. 
\end{propn}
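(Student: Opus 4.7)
The plan is to first observe that $\phi_2$ logically entails $\phi_1$ in any Kripke structure whose transition relation is serial (which $\M(N_M^h)$ is, inherited from $M$), because $AF\,\psi$ implies $EF\,\psi$. So it suffices to prove $\M(N_M^h)\models\phi_2$ for $h\in\{1,2\}$. Throughout, I fix an arbitrary reachable state $(s,q)$ of $\M(N_M^h)$ at which $t_i=p_j$ holds and argue that on every path from $(s,q)$, the atom $d_i=\bot$ eventually becomes true within a bounded number of transitions. The crucial preliminary invariant is that once $t_i=p_j$ holds, it remains true until $d_i$ is set to $\bot$: rule R1a only rewrites $t_i$ in the branch where $d_i=\bot$, and R2a is the unique rule that sets $d_i=\bot$; hence reaching $d_i=\bot$ is equivalent to $c_i$ being served by $p_j$.

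For $h=1$, the key observation is that the update function $\delta_u^1$ is \emph{independent of the environment path}: it leaves the normative state unchanged on second-round states and increments every coordinate modulo $m$ on first-round states. Consequently, along every path from $(s,q)$ the sequence of normative states visited is a deterministic orbit of period $m$, and within at most $m$ further first-round visits the normative state reaches some $(y_1,\ldots,y_n)$ with $y_j=i$. At the immediately following second-round state, $\delta_n^1$ forbids every joint action whose $j$-th component $a_{\subC_j}$ fails to contain $c_i$. Combined with R2a (which implements the allowed $a_{\subC_j}$ by actually removing $g_j$ from $rr_i$, given $t_i=p_j$), this forces $d_i=\bot$ at the next state on every path. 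The total path length is bounded by $2m$ transitions, giving $AF\,d_i=\bot$.

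For $h=2$, I would run the same scheme but substitute the FIFO-queue discipline for the rotation: show by induction on the position of $c_i$ in $p_j$'s queue that the position strictly decreases on every second-round step (the head is served, by the analogue of the constraint above, and the invariant for $h=1$ guarantees no insertions occur in front of $c_i$). Since the queue has length at most $m$, $c_i$ reaches the head in at most $m-1$ rounds and is then served. I expect the main obstacle to be a corner case in the $h=1$ argument: when $y_j$ rotates to $i$ \emph{before} $t_i=p_j$ has ever held, the constraint ``$c_{y_j}\in\subC_j$'' could in principle clash with the capability constraint $|\subC_j|\le b_j$ and with the effective matching $\subC_j\subseteq\{c_k\mid t_k=p_j\}$ in R2a, potentially making $\delta_n^1(s_2,q)=\prod_i L_i(s_2)$ and violating the well-formedness requirement on normative systems. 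I would handle this by verifying that the action $a_{\{c_{y_j}\}}$ is always syntactically available to $p_j$ (since $|\{c_{y_j}\}|\le 1\le b_j$) and is allowed by $\delta_n^1$; its effect under R2a is harmlessly vacuous when $t_{y_j}\neq p_j$, so the normative system is well-formed, and the argument above only invokes the satisfaction step once the invariant $t_i=p_j$ has become true, keeping the reduction clean.
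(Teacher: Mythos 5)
The paper does not actually prove this proposition: it is asserted with a single sentence of intuition (``all the consumers' requests can be satisfied''), so your argument supplies detail that the paper omits rather than paralleling an existing proof. Your overall strategy is sound and, for $h=1$, essentially complete: the reduction of $\phi_1\land\phi_2$ to $\phi_2$ via seriality of $T^\dagger$ (guaranteed by $\delta_n(s,q)\subset\Pi_{i}L_i(s)$ and totality of $T$ in actions) is correct; the persistence invariant ``$t_i=p_j$ holds until $d_i=\bot$'' is the right key lemma and does follow from R1a/R2a, since $d_i$ and $t_i$ are always reset to $\bot$ simultaneously and rewritten only when $d_i=\bot$; and the observation that $\delta_u^1$ is path-independent, so $y_j$ deterministically cycles through $\{1,\dots,m\}$ and hits $i$ within $m$ first-round visits (hence $2m$ transitions), correctly yields $AF\,d_i=\bot$. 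The corner case you flag is a genuine imprecision in the example as written: when $t_{y_j}\neq p_j$ at a second-round state, the maximality clause of R2a and the constraint $c_{y_j}\in\subC_j$ imposed by $\delta_n^1$ are in tension, and your repair (the action $a_{\{c_{y_j}\}}$ is always in $Act_{p_j}$ since $b_j\geq 1$, and is vacuous when $c_{y_j}$ has no pending request) is a reasonable reading that preserves well-formedness; the paper is silent on this. The only real soft spot is the $h=2$ case: your queue-position induction is plausible in shape (head served each round, no insertions ahead of $c_i$), but it cannot be checked because the paper explicitly declines to define $\delta_n^2$ and $\delta_u^2$ (``details are omitted here''), so any proof for $N_M^2$ necessarily rests on an assumed formalization; you should state the FIFO semantics you are assuming (in particular, when and where $c_i$ is enqueued, and that the head is dequeued exactly when served) before running the induction. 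With that caveat, your proof is more rigorous than what the paper provides.
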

\end{example}

The above example
suggests that, to achieve some objectives, we need {\it dynamic} normative systems to represent the changes of social norms under different circumstances. Then, another question may follow about the maximum number of normative states. The dynamic system could be uninteresting if the number of states can be infinite. Fortunately, in the next section, we show with the complexity result that, for objectives expressed with CTL formulas, in the worst case, an exponential number of normative states are needed.

\section{The Complexity of Norm Synthesis}

We have the following result for norm synthesis. 

\begin{theorem}\label{thm:synthesis}
The norm synthesis problem is EXPTIME-complete, with respect to the sizes of the system and the objective formula. 
\end{theorem}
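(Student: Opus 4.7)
The plan is to establish matching EXPTIME upper and lower bounds separately, and the two directions require rather different techniques.

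For the upper bound, my first step is to bound the normative-state space $Q$ that needs to be considered. Since $\phi$ has only finitely many subformulas, a Fischer--Ladner/filtration argument shows that if some finite-state normative system realises $\phi$ then one does using at most $2^{O(|\phi|)}$ normative states: the information $\delta_u$ must remember is essentially which CTL obligations are still pending, and there are only exponentially many such types. Given this bound, I would recast norm synthesis as the nonemptiness problem for an alternating parity tree automaton $A_{M,\phi}$. The automaton processes trees whose nodes are labelled with pairs $(s,q) \in S\times Q$ together with a choice of the set $\delta_n(s,q) \subsetneq \Pi_{i\in\Ags} L_i(s)$ of disallowed joint actions, and accepts exactly those labellings that are functional (the same $(s,q)$ is labelled consistently everywhere) and make the induced Kripke structure satisfy $\phi$. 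Its size is polynomial in $|M|$ and exponential in $|\phi|$, and nonemptiness of such automata is decidable in EXPTIME by standard Kupferman--Vardi--Wolper machinery; whenever nonempty, a finite-memory witness of exponential size exists, which directly yields the desired tuple $(Q, \delta_n, \delta_u, q_0)$.

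For the lower bound, I would reduce from the satisfiability problem for CTL, which is EXPTIME-complete. Given a CTL formula $\psi$ over propositions $\Prop$, I construct a ``universal'' multiagent system $M_\psi$ whose environment states range over all truth-assignments to $\Prop$, whose joint action alphabet is large enough to realise any transition between states, and whose transition relation admits every triple. Under this construction, a normative system on $M_\psi$ is precisely a finite-state choice of which transitions of the universal frame to retain, so the reachable part of $\M(N_{M_\psi})$ can realise any finite Kripke model over $\Prop$. By the finite-model property for CTL, norm synthesis succeeds on $(M_\psi, \psi)$ if and only if $\psi$ is satisfiable, giving EXPTIME-hardness.

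The main technical obstacle is in the upper bound, specifically in enforcing within the tree automaton that $\delta_n$ and $\delta_u$ are honest functions of $(s,q)$ only: different occurrences of the same pair $(s,q)$ along different branches must agree on their labels, which is a global constraint that alternating tree automata do not express natively. The standard remedy is to guess the full tables $\delta_n$ and $\delta_u$ up front with an existential component of size $2^{O(|\phi|)} \cdot |M|$, and then locally verify in the tree automaton that each visited $(s,q)$ respects the guessed tables; showing that this guess fits inside the EXPTIME budget and that the resulting tree automaton remains computable within the claimed bounds is the most delicate part of the argument. The lower bound, by contrast, is conceptually routine once $M_\psi$ is defined so that the normative-state bookkeeping does not leak additional atomic propositions that could be observed by $\psi$.
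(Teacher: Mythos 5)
Your upper-bound strategy---an exponential bound on the number of normative states plus tree-automata nonemptiness---is the same in outline as the paper's, but the step you yourself flag as ``the most delicate part'' is a genuine gap, not a routine remedy. Guessing the full tables $\delta_n$ and $\delta_u$ up front does not fit in EXPTIME: a table of size $2^{O(|\phi|)}\cdot|M|$ has doubly exponentially many possible values, so ``guess then verify'' gives at best NEXPTIME (and deterministic enumeration gives 2-EXPTIME). The point you are missing is that no global functionality constraint needs to be imposed on the tree automaton at all. The paper's automaton $A_M$ runs on the computation tree of $M$ and at each node nondeterministically selects a nonempty set $B$ of successors to retain; its product with the exponential-size tree automaton for the CTL objective is tested for nonemptiness in time polynomial in the product's size. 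A nonempty B\"uchi (or parity) tree automaton accepts a \emph{regular} tree generated by a finite-state transducer whose memory states can be taken as the normative states $Q$, with $\delta_n$ and $\delta_u$ read off from the transducer's transitions---so the synthesised normative system is a function of $(s,q)$ \emph{by construction}, and the whole procedure is deterministic exponential time. Your filtration bound on $|Q|$ then falls out as a corollary rather than serving as a prerequisite.

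The lower bound has a more basic defect: your universal system $M_\psi$ has one state per truth assignment to $\Prop$, i.e.\ $2^{|\Prop|}$ states, and $|\Prop|$ can be as large as $|\psi|$. Since the size of the input system is measured by its number of states, this is an exponential-time reduction and therefore establishes no hardness. (There is also the smaller issue that the verification problem requires $\phi$ to hold at \emph{all} initial states, so you would need to designate a single initial state of $M_\psi$.) The reduction might be rescued by invoking EXPTIME-hardness of CTL satisfiability over a constant or logarithmic number of atomic propositions, but you neither state nor justify such a result. The paper avoids the problem by reducing instead from acceptance of a linearly bounded alternating Turing machine on the empty tape: it builds a polynomial-size single-agent system in which the agent moves at existential states (so the normative system resolves the existential choices by pruning branches) and the environment moves at universal states, and a CTL objective forces the pruned system to implement the tape-consistency rules and to reach accepting states on all branches.
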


\begin{proof}
We first show the upper bound: {\bf EXPTIME Membership}. 

From $M=(S,\{Act_i\}_{i\in\Ags},\{L_i\}_{i\in\Ags},\{O_i\}_{i\in\Ags},I,T,\pi)$, we define a \buchi\ tree automaton $A_{M}=(\Sigma,D,Q,\delta,q_0,Q)$ such that 
\be
\item $\Sigma=\mathcal{P}(V)\cup \{\bot\}$,  $Q=S\times \{\top,\vdash,\bot\}$, $q_0=(s,\top)$ for $s\in I$,
\item $D=\bigcup_{s\in S}\{1,...,|T(s)|\}$ where $T(s)=\{s'~|~\exists a\in Act: (s,a,s')\in T\}$, 
\item $\delta: Q\times \Sigma\times D\rightarrow 2^{Q^*}$ is defined as follows: for $s\in S$ and $k=|T(s)|$ with $T(s)=(s_1,...,s_k)$, we have 
(a) if $m\in \{\vdash, \bot\}$ then $\delta((s,m),\bot,k)=\{((s_1,\bot),...,(s_k,\bot))\}$, and 
(b) if $m\in \{\vdash,\top\}$, then we let 
$((s_1,y_1),...,(s_k,y_k))\in \delta((s,m),\pi(s),k)$ such that, there exists a nonempty set $B\subseteq \{1,...,k\}$ of  indices such that 
\be
\item $y_i=\top$, for all $i\in B$, and
\item $y_j=~\vdash$, for all $j\not\in B$ and $1\leq j\leq k$.
\ee
\ee 
Note that we use $\Fairness=Q$ to express that we only care about infinite paths. Moreover, the formula $\phi$ needs to be modified to reject those runs where $\bot$ is labeled on the states. This can be done by following the approach in~\cite{KV1996}. We still call the resulting formula $\phi$. 

Given a CTL formula $\phi$ and a set $D\subset \nat$ with a maximal element $k$, there exists a \buchi\ tree automaton $A_{D,\neg\phi}$ that accepts exactly all the tree models of $\neg\phi$ with branching degrees in $D$. By~\cite{VW1986}, the size of $A_{D,\neg\phi}$ is $O(2^{k\cdot |\phi|})$.
The norm synthesis problem over $M$ and $\phi $ for $\phi$ a CTL formula is equivalent to checking the emptiness of the product automaton $A_{M}\times A_{D,\neg\phi}$. The checking of emptiness of \buchi\ tree automaton can be done in quadratic time, so the norm synthesis problem 
for $\phi$ a CTL formula 
can be done in exponential time. 

Therefore, the norm synthesis problem over $M$ and $\phi$ can be done in exponential time with respect to $|S|$, $|\bigcup_{i\in\Ags}\Acts_i|$, and $|\phi|$. That is, it is in EXPTIME. 

We then show the lower bound: {\bf EXPTIME Hardness}

The lower bound is reduced from the problem of a linearly bounded alternating Turing machine (LBATM) accepting an empty input tape, which is known to be EXPTIME-complete~\cite{CKS1980}. Let $AT$ be an LBATM. A system $M(AT)$ of a single agent is constructed such that the agent moves on $\exists$ states and the environment moves on $\forall$ states. The normative system, applied on the agent's behaviour, may prune some branches of the system. 
We use an objective formula $\phi$ to express that the resulting system correctly implements several modification rules (which makes the resulting system moves as the $AT$ does) and all paths lead to  accepting states. Therefore, the norm synthesis problem on the system $M(AT)$ and the objective formula $\phi$ is equivalent to the acceptance of the automaton $AT$ on empty tape. That is, the complexity is EXPTIME hard. 

An alternating Turing machine $AT$ is a tuple $(Q,\Gamma,\delta,q_0,g)$ where $Q$ is a finite set of states, $\Gamma$ is a finite set of tape symbols including a blank symbol $\bot$, $\delta: Q\times\Gamma\rightarrow \mathcal{P}(Q\times \Gamma\times \{-1,+1\})$ is a transition function, $q_0\in Q$ is an initial state, $g:Q\rightarrow \{\forall,\exists,accept, reject\}$ specifies the type of each state. 
We use $b\in \Gamma$ to denote the blank symbol. 
The input $w$ to the machine is written on the tape. We use $w_i$ to denote the alphabet written on the $i$th cell of the tape. 

The size of a Turing machine is defined to be the size of space needed to record its transition relation, i.e., $2\times |\Gamma|^2\times |Q|^2$. 
An LBATM is an ATM which uses $m$ tape cells for a Turing machine description of size $m$. Let $L=\{1,...,m\}$. A configuration of the machine contains a state $q\in Q$, the header position $h\in L$, and the tape content $v\in \Gamma^*$. 
A configuration $c=(q,h,v)$ is accepting if $g(q)=accept$, or $g(q)=\forall$ and all successor configurations are accepting, or $g(q)=\exists$ and at least one of the successor configuration is accepting. 
The machine $AT$ accepts an empty tape if the initial configuration of $M$ (the state of $M$ is $q_0$, the head is at the left end of the tape, and all tape cells contain symbol $\bot$) is accepting, and to reject if the initial configuration is rejecting. It is known that the problem is EXPTIME-complete.

We construct a multiagent system $M$ with a single agent $i$. 
We have\\ $M=(S,\{Act_i\}_{i\in\Ags},\{L_i\}_{i\in\Ags},\{O_i\}_{i\in\Ags},I,T,\pi)$ where 
\be
\item $S=(Q\times L)\cup (Q\times L\times \Gamma)\cup (Q\times L\times L\times \Gamma)$,
\item $Act_i=\{a_2,a_3\}\cup\{a_{rcd}~|~r\in Q, c\in \Gamma, d\in \{-1,+1\}\}\cup \Sigma$, 
\item the function $L_i$ is defined as 
\be
\item 
$L_i((t_i,o_{q,h}))=\Sigma$,
\item 
$L_i((t_i,o_{q,h,b}))=\{a_2\}$ for $g(q)=\forall$,
\item  
$L_i((t_i,o_{q,h,b}))=\{a_{rcd}~|~r\in Q, c\in \Gamma, d\in \{-1,+1\}\}$ for $(r,c,d)\in\delta(q,b)$ and $g(q)=\exists$, and
\item 
$L_i((t_i,o_{q,h_1,h_2,c}))=\{a_3\}$.
\ee

\item the function $O_i$ is defined as follows:
$O_i((q,h))=o_{q,h}$,
$O_i((q,h,b))=o_{q,h,b}$,
$O_i((q,h_1,h_2,c))=o_{q,h_1,h_2,c}$.

\item $I=\{(q_0,1)\}$,
\item the transition relation $T$ is defined as follows:
\be
\item $((q,h),b,(q,h,b))\in T$ for $b\in\Gamma$.
\item $((q,h,b),a_2,(r,h+d,h,c))\in T$ for $(r,c,d)\in\delta(q,b)$ and $g(q)=\forall$.
\item $((q,h,b),a_{rcd},(r,h+d,h,c))\in T$ for $g(q)=\exists$.
\item $((r,h_1,h_2,c),a_3,(r,h_1))\in T$.
\ee

\ee
Intuitively, a transition $(r,c,d)\in\delta(q,b)$ is simulated by three consecutive transitions: 
\be
\item $((q,h),b, (q,h,b))$, where the agent guesses the correct symbol written in cell $h$. 
\item $((q,h,b),a,(r,h+d,h,c))$ such that 
if $a=a_2$ then the state $q$ is an $\forall$ state and it is the environment that moves according to  $\delta$, and 
if $a=a_{rcd}$ then the state $q$ is an $\exists$ state and it is the agent that moves according to $\delta$.
\item $((r,h+d,h,c),a_3,(r,h+d))$, where the system makes a deterministic transition.  
\ee
For the second transition, we let the environment move on $\forall$ states because all the successor states have to be explored, while let the agent move on $\exists$ states so that it allows the normative system to prune some branches. 

Let $V=\{Q_q~|~q\in Q\}\cup\{H_h,G_h~|~h\in L\}\cup\{R_b,W_b~|~b\in \Gamma\}\cup \{acc, k_2\}$ be a set of boolean variables. Intuitively, $Q_q$ represents that the current state is $q$, $H_h$ represents that the current header position is $h$, $G_h$ represents that the last header position is $h$, $R_b$ represents that the symbol on the current cell is $b$, and $W_b$ represents that the symbol written on the last header position is $b$. We define the labelling function $\pi$ as follows:
$\{Q_q,H_h\}\subseteq\pi((q,h))$,
$\{Q_q,H_h,R_b,k_2\}\subseteq \pi((q,h,b))$,
$\{Q_r,H_{h_1},G_{h_2},W_c\}\subseteq \pi((r,h_1,h_2,c))$,
$acc\in \pi((q,h))$ if $g(q)=accept$. Moreover, we let $\Fairness=\emptyset$.

We need the following formulas. 
\be
\item Formula 
$\phi_1(h)\equiv A((H_h\land k_2\Rightarrow R_{\bot})~\until \bigvee_{c\in\Gamma}(G_h\land W_c))$
expresses that the symbol on position $h$ is $\bot$ until it is modified. 
\item Formula  
$\phi_2(h,b)\equiv G_h\land W_b\Rightarrow A((H_h\land k_2\Rightarrow R_b)~\until \bigvee_{c\in\Gamma}(G_h\land W_c))$
expresses that once the symbol on position $h$ is modified into $b$, it will stay the same until the next modification occurs. 
\ee 
Then the formula to be model checked on the system is 
$$\phi=(\bigwedge_{h\in L} \phi_1(h) \land AG \bigwedge_{h\in L}\bigwedge_{b\in\Gamma}\phi_2(h,b))\land AF acc$$
Intuitively,  the norm synthesis problem over $M$ and $\phi$ is to determine the existence of a normative system, which by pruning the behaviour of the agent, can make the resulting system correctly implements the modification rules and all branches can be accepting.  Therefore, the norm synthesis problem is equivalent to the acceptance of the automaton $AT$ on empty tape. That is, the complexity is EXPTIME hard. 
\end{proof}

\section{Agent Recognition of Social Norms}\label{sec:recognition}

For a multiagent system to be autonomous without human intervention, it is important that it can maintain its functionality when new agents join or old agents leave. 
For a new agent to join and function well, it is essential that it is capable of recognising the social norms that are currently active. As stated in the previous sections, the agent has only partial observation over the system state, and is not supposed to observe the social norms. 
On the other hand, it is also unrealistic to assume that the agent does not know anything about the social norms of the system it is about to join. Agent is designed to have a set of prescribed capabilities and is usually supposed to work within some specific scenarios. Therefore, the actual situation can be that, the agent knows in prior that there are a set of possible normative systems, one of which is currently applied on the multiagent system. 
We remark that, assuming a set of normative systems does not weaken the generality of the setting, because Theorem 1 implies that there are a finite number of possible normative systems (subject to a bisimulation relation between Kripke structures). 
This situation naturally leads to the following two new problems: 
\begin{itemize} 
\item (${\bf NC_1}$) to determine whether the agent can always recognise which normative system is currently applied; and
\item (${\bf NC_2}$) to determine whether the agent can find a way to recognise which normative system is currently applied. 
\end{itemize}
The successful answer to the problem $NC_1$ implies the successful answer to the problem $NC_2$, but not vice versa. Intuitively, the successful answer of $NC_1$ implies a high-level autonomy of the system that  the new agent can be eventually incorporated into the system no matter how it behaves. We assume that once learned the social norms the new agent will behave accordingly. If such an autonomy of the system cannot be achieved, the successful answer of $NC_2$ implies a high-level autonomy of the agent that, by moving in a smart way, it can eventually recognise the social norms. 

We formalise the problems first. Let $\Psi$ be a set of possible normative systems defined on a multiagent system, 
$Path(\M(N))$ be the set of possible paths of the Kripke structure $\M(N)$ for $N\in \Psi$. 
We assign every normative system in $\Psi$ a distinct index, denoted as $ind(N)$. This index is attached to every path $\rho\in Path(\M(N))$, and let $ind(\rho)=ind(N)$. 

Let the new agent be $x$ such that $x\notin \Ags$ and its observation function be $O_x$. For any state $(s,q)\in S^\dagger$, we define a projection function $\widehat{(s,q)}=s$. So $\widehat{\rho}$ is the projection of a path $\rho$ of a Kripke structure to the associated multiagent system. We extend $O_x$ to the paths of Kripke structure $\M(N)$ as follows: 
$O_x(\rho s^\dagger)=O_x(\rho)\cdot O_x(\widehat{s^\dagger})$ for $\rho\in Path(\M(N)$ and $s^\dagger\in S^\dagger$. We have $O_x(\epsilon)=\epsilon$, which means when a path is empty, the observation is also empty. We also define its inverse 
 $O_x^{-1}$ which gives a sequence $o$ of observations, returns a set of possible paths $\rho$ on which agent $x$'s observations are $o$, i.e., 
$$O_x^{-1}(o)=\{\rho\in Path(\M(N))~|~O_x(\rho)=o, N\in\Psi\}.$$ 

W.l.o.g., we assume that $N_0\in \Psi$ is the active normative system. Let $\nat$ be the set of natural numbers, we have 
\begin{definition}\label{def:NC}
$NC_1$ problem is the existence of a number $k\in \nat$ such that 
for all paths $\rho\in Path(\M(N_0))$ such that $|\rho| \geq k$, we have that $\rho'\in O_x^{-1}(O_x(\rho))$ 
implies that $ind(\rho')=ind(\rho)$.

$NC_2$ problem is the existence of a path $\rho\in Path(\M(N_0))$ such that for all 
$\rho'\in O_x^{-1}(O_x(\rho))$ 
we have  $ind(\rho')=ind(\rho)$. 
\end{definition}
Intuitively, $NC_1$ states that as long as the path is long enough, the new agent can eventually know that the active normative system is $N_0$. That is, no matter how the new agent behaves, it can eventually recognise the current normative system. On the other hand, $NC_2$ states that such a path exists (but not necessarily for all paths). That is, to recognise the normative system, the new agent needs to move smartly.

\begin{example}
For the system in Example~\ref{example:model}, we assume a new consumer agent $c_v$ such that $v=m+1$. Also, we define  $O_{c_v}(s)=\{c_i~|~c_i\in C, t_i(s)=t_v(s)\neq \bot\}$ for all $s\in S$. 
Intuitively, the agent $c_v$ keeps track of the set of agents that are currently having the same request. Unfortunately, we have 

\begin{propn}\label{thm:failed}
There exists an instance of system $M$ such that under the set $\Psi=\{N_M^1,N_M^2\}$ of normative systems, both $NC_1$ and $NC_2$ are unsuccessful.
\end{propn}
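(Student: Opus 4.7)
The plan is to choose a small concrete instance of $M$ extended with $c_v$, and to show that the observation streams of $c_v$ generated by $\M(N_M^1)$ and by $\M(N_M^2)$ coincide: for every $\rho\in Path(\M(N_M^1))$ there exists $\rho'\in Path(\M(N_M^2))$ with $O_{c_v}(\rho)=O_{c_v}(\rho')$, and conversely. Fixing $N_0=N_M^1$, this implies that $O_{c_v}^{-1}(O_{c_v}(\rho))$ always contains a path with index different from $ind(\rho)$, so $NC_2$ fails on every $\rho$, whence $NC_1$ also fails (the latter is the stronger requirement that the same property hold uniformly for all paths of length at least some $k$).

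I would adapt the instance of Example~\ref{example:model}, enlarged with $c_v$ and $r_v=\{g_1\}$, and extend each normative system in the natural way: include $c_v$ in the round-robin index for $N_M^1$ and in the FIFO queue for $N_M^2$. The key observation is that $O_{c_v}(s)=\{c_i\in C\mid t_i=t_v\neq\bot\}$ reports only the current co-requester set; it hides both the priority pointer $(y_1,\dots,y_n)$ of $N_M^1$ and the queue contents of $N_M^2$, and in particular hides whose turn it is to be served among several concurrent requesters. Because consumer actions are not constrained by either normative system, the actions of the other consumers can be freely scheduled (including deferred via $a_\bot$), so as to make the FIFO order of $N_M^2$ coincide, step by step, with the round-robin order of $N_M^1$ along any given prefix.

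The construction of $\rho'$ from $\rho$ proceeds by induction on the number of two-round interactions, maintaining as an invariant that, at each matched state, the tuple $(k,\,rr_v,\,d_v,\,t_v)$ agrees in the two runs and the co-requester set of $c_v$ is identical. Whenever $\rho$ has a producer $p_j$ satisfy some consumer $c$ selected by the $N_M^1$ pointer, the matching step in $\rho'$ arranges, by padding other consumers with $a_\bot$ until the desired moment, for $c$ to sit at the head of $p_j$'s $N_M^2$ queue, so that $\delta_n^2$ permits exactly the same producer action. The main obstacle is purely bookkeeping: checking that the updates of $rr_i$, $d_i$, $t_i$ stay mutually consistent across the two runs, that the $N_M^2$-path we build actually respects $\delta_n^2$ at each step, and that the construction can be continued indefinitely. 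Both normative systems are work-conserving (a producer serves someone whenever a non-empty request set is present), so deadlocks do not arise; and because $O_{c_v}$ abstracts away the internal ordering machinery, the two paths produce identical observation sequences, establishing the proposition.
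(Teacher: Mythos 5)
Your top-level reduction is the same as the paper's: exhibit an instance in which every path of $\M(N_M^1)$ has an observationally equivalent path of $\M(N_M^2)$ (and conversely), conclude that $NC_2$ fails on every path, and note that failure of $NC_2$ entails failure of $NC_1$. That logical skeleton is fine. The gap is in the construction of the observationally equivalent path. Your simulation hinges on the claim that ``the actions of the other consumers can be freely scheduled (including deferred via $a_\bot$)'' so that the FIFO queue of $N_M^2$ can be forced to agree, step by step, with the round-robin pointer of $N_M^1$. But in the model as defined, rule R1a gives consumers no such freedom: whenever $d_i=\bot$ the transition relation forces $c_i$ to immediately generate a demand and send a request; the only consumer nondeterminism is the choice of which remaining good to demand and which producer to ask. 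The ability to defer or cancel a request is exactly the extra capability the paper later introduces as rule R2b' (and only for $c_v$), and with that capability the paper shows $NC_2$ becomes \emph{successful} (Proposition~\ref{thm:holds}) precisely because the FIFO and round-robin orders can then be driven apart. So the key step of your inductive invariant (``arranging for $c$ to sit at the head of $p_j$'s queue by padding with $a_\bot$'') is unavailable, and without it the two normative systems are \emph{not} observationally equivalent on the rich instance you chose (two producers, heterogeneous requirement sets): the round-robin pointer of $N_M^1$ advances independently of who has requested, while the $N_M^2$ queue contains only actual requesters, and these orderings genuinely diverge.

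The fix is to choose a far more degenerate instance, which is what the paper does: a single producer $p_1$ with $b_1=1$ and consumers all having $r_i=\{g_1\}$. There every consumer requests $p_1$ at every opportunity, both Kripke structures have essentially one path, the service order under round-robin and under FIFO coincide ($c_1,c_2,\dots,c_m,c_v,c_1,\dots$), and the observation sequence of $c_v$ is literally identical in the two systems --- no scheduling argument is needed. I recommend replacing your instance and simulation with that degenerate case; your opening and closing logic can then stand as is.
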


This can be seen from a simple case where there are a single producer $p_1$ with $b_1=1$ and a set of consumers $C$ such that $r_i=\{g_1\}$ for all $c_i\in C$. For the initial state, every consumer sends its request to $p_1$, so $\{c_i~|~t_i=p_1\}=C$. For any path $\rho_1$ of $\M(N^1_M)$ and $\rho_2$ of $\M(N^2_M)$, we have $\widehat{\rho_1}=\widehat{\rho_2}=s_0s_2s_1^1s_2...s_1^ms_2s_1^vs_2s_1^1...$ where $s_0=(1,(\emptyset,\bot,\bot),...,(\emptyset,\bot,\bot))$, $s_2=(2,(\{g_1\},g_1,p_1),...,(\{g_1\},g_1,p_1))$, and $s_1^i$ is different with $s_2$ in $c_i$'s local state, e.g., $s_1^1=(1,(\{\},\bot,\bot),...,(\{g_1\},g_1,p_1))$. And therefore $O_{c_v}(\rho_1)=O_{c_v}(\rho_2) = \emptyset C(C\setminus\{c_1\})C(C\setminus\{c_2\})...$, i.e., the agent $c_v$'s observations are always the same\footnote{We reasonably assume that, for $N_M^1$, when a producer sees $c_v$, it will adjust its range in normative states from $\{1,...,m\}$ to $\{1,...,m,v\}$.}. 
That is, the new agent $c_v$ finds that for $\rho_1$, the single path on $K(N_M^1)$,  there are $\rho_2\in O_{c_v}^{-1}(O_{c_v}(\rho_1))$ and $ind(\rho_1)\neq ind(\rho_2)$. 
Therefore, neither $NC_1$ nor $NC_2$ can be successful in such a case. 
\end{example}

The reasons for the above result may come from either the insufficient capabilities of the agent or the designing of normative systems. We explain this in the following example. 
\begin{example}
First, consider that we increase the capabilities of the new agent by updating the rule $R2b$ in Section~\ref{sec:pomas}. 
\begin{itemize}
\item[R2b'.] the new agent $c_v$ may cancel its current request by letting $d_v=t_v=\bot$; all other consumer agents execute action $a_\bot$; and let $k=1$. 
\end{itemize}
With this upgraded capabilities of the new agent, the  $NC_2$ can be  successful. The intuition is that, by canceling and re-requesting for at least twice, the ordering of consumer agents whose requests are satisfied can be different in two normative systems: with $N_M^2$, there are other agents $c_i$ between $c_m$ and $c_v$, but with $N_M^1$, their requests are always satisfied consecutively. Note that, by its new capabilities, $c_v$ can always choose a producer agent which have more than 2 existing and future requests (Assuming that $n\ll m$, which is usual for a business ecosystem). 

\begin{propn}\label{thm:holds}
With the new rule R2b', the $NC_2$ problem is successful on system $M$ and the set $\Psi=\{N_M^1,N_M^2\}$.
\end{propn}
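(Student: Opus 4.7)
The plan is to produce a concrete instantiation of $M$, fix an active normative system $N_0\in\Psi$, and exhibit a single path $\rho\in Path(\M(N_0))$ whose $c_v$-observations cannot be realised by any path of $\M(N)$ for $N\in\Psi\setminus\{N_0\}$. The argument exploits the structural asymmetry between the two normative systems: $N_M^1$'s priority cycle is deterministic and history-oblivious, whereas $N_M^2$'s FIFO queue is fully history-sensitive. Rule R2b' gives $c_v$ a controlled way to perturb that history, and the proof consists of showing that a suitable perturbation becomes visible through $O_{c_v}$.

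I would first specialise to the minimal instance with one producer $p_1$ of capacity $b_1=1$, consumers $c_1,\ldots,c_m$ with $m \geq 2$, and $r_i = r_v = \{g_1\}$. Having a single producer removes the freedom in R1a and makes the evolution of both Kripke structures depend only on $c_v$'s choices. Taking $N_0 = N_M^2$, the witnessing $\rho$ is described by the following behaviour of $c_v$: all consumers issue a first request to $p_1$, filling the FIFO queue as $[c_1,\ldots,c_m,c_v]$; in each of the first $m$ serving rounds $c_v$ invokes R2b' to cancel (it is never served in these rounds under $N_M^2$) and resends in the following R1a; from round $m+1$ onward $c_v$ does not cancel.

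I would then track the queue of $\M(N_M^2)$ along $\rho$ in parallel with the priority $y_1$ that any observation-matching $\rho'\in Path(\M(N_M^1))$ must carry. The $m$ cancel/resend pairs cyclically rotate the queue, so just before the $(m+1)$-st serving round its head is some $c_j\neq c_v$. In the other system, $y_1$ advances by one on every $s_2\to s_1$ transition and, under the footnote extension to the range $\{1,\ldots,m,v\}$, is forced to equal $v$ at that same round. Hence $\M(N_M^1)$ must serve $c_v$, producing $t_v=\bot$ and $O_{c_v}=\emptyset$ at the next observable round-1 state, whereas $\M(N_M^2)$ serves $c_j$, yielding $t_v=p_1$ and a non-empty $O_{c_v}$. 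This single observable mismatch rules out every candidate $\rho'$ and establishes $NC_2$.

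The main obstacle is that $O_{c_v}$ only reports the \emph{set} of rival consumers currently targeting $c_v$'s producer, stripping away any ordering information, so one must verify that the observation prefix of $\rho$ cannot be mimicked by an $\M(N_M^1)$-path in which $y_1$ fails to equal $v$ at the critical round. This is settled by the determinism of R1a in the chosen instance, by the fact that $q_0^1$ pins down the initial priority, and by the synchrony between $\delta_u^1$ and the $s_2\to s_1$ transitions: together these pin $y_1$ at the critical state to exactly $v$, leaving no slack for $\M(N_M^1)$ to avoid serving $c_v$.
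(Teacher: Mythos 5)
Your proposal is correct and follows essentially the same route as the paper: both arguments exploit rule R2b' to let $c_v$ repeatedly cancel and re-request so that its position in $N_M^2$'s FIFO queue drifts away from the slot dictated by $N_M^1$'s history-oblivious cyclic counter, yielding a round at which exactly one of the two systems must serve $c_v$ and hence an observable difference in $O_{c_v}$. The paper only sketches this as an intuition (``cancel and re-request at least twice''), whereas you instantiate it concretely with a single producer and an explicit schedule of $m$ cancellations; this is a legitimate fleshing-out of the same idea rather than a different approach.
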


However, the $NC_1$ problem is still unsuccessful, because the agent $c_v$ may not move in such a smart way. 
For this, we replace $N_M^1$ with $N_M^6=(Q^1,\delta_n^1,\delta_u^6,q_0^1)$ such that 
\begin{itemize}
\item $\delta_u^6(q,s_2)=q$ and $\delta_u^6((y_1,...,y_n),s_1)=(y_1',...,y_n')$, s.t.
$y_j'=((y_j+1)\mod m)+1$ for $j\in \{1,...,n\}$.
Intuitively, the normative state increments by 2 (modulo m). 
\end{itemize}
\begin{propn}\label{thm:nc3}
Both $NC_1$ and $NC_2$ problems are successful on system $M$ and the set $\Psi=\{N_M^2,N_M^6\}$.
\end{propn}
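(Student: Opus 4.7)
The plan attacks $NC_1$ first, since a successful $NC_1$ entails a successful $NC_2$ by Definition~\ref{def:NC}. The key structural fact to exploit is that $N_M^2$ schedules each producer's satisfied consumers in FIFO order of request arrivals, whereas $N_M^6$ drives each producer by a deterministic jump-by-two counter $y_j$ that is independent of request history. My aim is to show these two regimes necessarily leave distinguishable traces in $O_{c_v}$ along any sufficiently long path.

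First I would exploit R1a to observe that $c_v$ cannot maintain $t_v = \bot$ forever: in every round-one step with $rr_v$ nonempty (and $rr_v$ is reset to $r_v$ whenever empty), $c_v$ must generate a new demand and target. Hence along any path there are infinitely many indices at which $O_{c_v}(s) = \{c_i \mid t_i(s) = t_v(s) \neq \bot\}$ is nonempty and lists exactly the other consumers sharing $c_v$'s target producer. At each such index, when the producer satisfies one consumer, that consumer's $t_i$ becomes $\bot$ and the agent is removed from the observation set. This lets $c_v$ reconstruct from its observation stream the ordered sequence of consumers the producer actually served.

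Second I would identify a bounded-length test fragment within any path that forces divergence. Because the $N_M^6$ counter $y_j$ evolves strictly deterministically, and because every consumer $c_i \neq c_v$ with nonempty $r_i$ repeatedly re-enters the request pool after being served, one can argue that within $O(m)$ rounds the FIFO head chosen by $N_M^2$ differs from the $c_{y_j}$ prescribed by $N_M^6$. Collecting these observations across all paths yields a uniform bound $k$ such that every path of length at least $k$ produces a window in $O_{c_v}$ that is realisable under only one of $N_M^2, N_M^6$, which establishes $NC_1$ and hence $NC_2$.

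The main obstacle will be the degenerate paths: those where $c_v$ aggressively cancels via R2b' or delays its own requests to mask divergence, and those where requests happen to be sent in an order making the FIFO queue coincide with the jump-by-two counter. I would handle the first by noting that even while $c_v$ is silent, the other $m$ consumers continue to interact with the producer under both regimes, and $c_v$ observes them whenever R1a forces it to re-engage; I would handle the second by a short combinatorial argument showing that FIFO and jump-by-two cannot agree on the served-consumer sequence for more than a bounded number of consecutive events, given the fixed request-regeneration pattern dictated by $r_1, \ldots, r_m$. This bounded-disagreement lemma is the technical crux that allows the uniform $k$ to be extracted.
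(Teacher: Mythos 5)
The paper itself offers no proof of Proposition~\ref{thm:nc3}: all it provides is the definition of $N_M^6$ and the one-line intuition that an increment-by-two counter produces a service order that cannot be confused with the FIFO order of $N_M^2$. Your proposal follows exactly that intuition and elaborates it in the right direction (reduce to $NC_1$, recover the served-consumer order from the shrinking of $O_{c_v}$, and argue that the two orders must diverge within a bounded window), so in terms of approach you are aligned with what the authors evidently had in mind, and your sketch is in fact more explicit than anything in the paper.

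That said, there is a genuine gap, and it sits precisely at what you yourself call the crux: the ``bounded-disagreement lemma'' is asserted, not proved, and it is the entire content of the proposition. Two concrete obstacles stand in its way. First, the paper never specifies $\delta_u^2$ --- in particular, the tie-breaking order in which simultaneously arriving requests enter the FIFO queue. Since all consumers request in the same round-one step after being served, the FIFO service order is essentially ``initial arrival permutation, repeated,'' and whether this can coincide for arbitrarily long with the jump-by-two permutation (e.g.\ when both cycles have the same period over $\{1,\ldots,m,v\}$) depends entirely on that unspecified tie-breaking; without pinning it down the lemma cannot be proved, and for an adversarial choice it is false. Second, for $NC_1$ you need the divergence to be \emph{witnessed by $c_v$ along every path}: $O_{c_v}$ only reveals consumers sharing $c_v$'s current target producer, and $c_v$ chooses both its demand and its target in rule R1a. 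If $c_v$ persistently targets a producer with at most one other requester, there is no service order for it to observe, and your appeal to ``the other $m$ consumers continue to interact'' does not help because those interactions are invisible to $c_v$. Closing the gap requires either restricting to the single-producer instance used in the discussion of Proposition~\ref{thm:failed} (where every consumer must target $p_1$) or an explicit argument that every producer $c_v$ can legally target accumulates enough concurrent requesters, under both $N_M^2$ and $N_M^6$, for the two service orders to differ within a uniformly bounded number of rounds.
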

\end{example}

\section{The Complexity of Norm Recognition}

The discussion in the last section clearly shows that, the two norm recognition problems are non-trivial. It is therefore useful to study if there exist efficient algorithms that can decide them automatically. 
In this section, we show a somewhat surprising result that the determination of $NC_1$ problem can be done in PTIME, while it is PSPACE-complete for $NC_2$ problem. 
%
Assume that the size of the set $\Psi$ is measured over both the number of normative systems and the number of normative states. We have the following conclusions.

\begin{theorem}\label{thm:NC_1}
The $NC_1$ problem can be decided in PTIME, with respect to the sizes of the system and the set $\Psi$. 
\end{theorem}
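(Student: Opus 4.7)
The plan is to reduce $NC_1$ to a reachable-cycle test in an observation-synchronized product, which can then be carried out in polynomial time by any standard SCC algorithm.

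First, I would rephrase when $NC_1$ \emph{fails}. Negating Definition~\ref{def:NC}, failure means: for every $k\in\nat$, there is a path $\rho\in Path(\M(N_0))$ of length at least $k$ and a path $\rho'\in O_x^{-1}(O_x(\rho))$ with $ind(\rho')\neq ind(\rho)$. Because $ind(\rho')\neq ind(N_0)$ forces $\rho'$ to live in $Path(\M(N))$ for some $N\in\Psi\setminus\{N_0\}$, and because $\Psi$ is finite, a pigeonhole argument yields the equivalent statement: there is a fixed $N\in\Psi\setminus\{N_0\}$ such that for arbitrarily large $k$ there are paths $\rho\in Path(\M(N_0))$ and $\rho'\in Path(\M(N))$ of length $k$ with $O_x(\rho)=O_x(\rho')$.

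Second, for each $N\in\Psi\setminus\{N_0\}$, I would construct an observation-synchronized product $\mathcal{P}_N$ of $\M(N_0)$ and $\M(N)$, whose states are the pairs $(u,v)\in S^\dagger_{N_0}\times S^\dagger_N$ satisfying $O_x(\widehat{u})=O_x(\widehat{v})$, whose initial states are such pairs with $u\in I^\dagger_{N_0}$ and $v\in I^\dagger_N$, and whose transitions are $(u,v)\to(u',v')$ whenever $(u,u')\in T^\dagger_{N_0}$, $(v,v')\in T^\dagger_N$, and $O_x(\widehat{u'})=O_x(\widehat{v'})$. A routine induction establishes that length-$k$ paths in $\mathcal{P}_N$ from initial states are in bijection with pairs of length-$k$ observationally-matched paths in the two Kripke structures.

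Third, since $\mathcal{P}_N$ is finite, the existence of arbitrarily long paths from an initial state is equivalent to the existence of a non-trivial strongly connected component reachable from an initial state. This is decidable in time linear in $|\mathcal{P}_N|$ by Tarjan's algorithm. Thus $NC_1$ holds iff no $\mathcal{P}_N$ (with $N\neq N_0$) has a reachable non-trivial SCC, and the total running time is $O\bigl(\sum_{N\in\Psi\setminus\{N_0\}}|S^\dagger_{N_0}|\cdot|S^\dagger_N|\bigr)$, polynomial in the sizes of $M$ and $\Psi$. The main obstacle is securing the cycle characterization cleanly: I would invoke the fact that $T$ is total in $M$ and that $\delta_n$ never disables every joint action, so neither $\M(N_0)$ nor $\M(N)$ has deadlocks, which ensures that a reachable cycle in $\mathcal{P}_N$ genuinely produces witnesses of every sufficiently large length in both Kripke structures simultaneously; the converse direction is the standard finiteness pigeonhole.
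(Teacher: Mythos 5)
Your proposal is correct and follows essentially the same route as the paper: an observation-synchronized product of $\M(N_0)$ with the other structures, followed by a reachable-SCC/cycle test via Tarjan's algorithm. The only cosmetic differences are that you build one product $\mathcal{P}_N$ per $N\neq N_0$ while the paper builds a single product whose second component ranges over $\bigcup_{N\neq N_0}S^\dagger_N$ (marking the mismatched-index states with a label $safe$), and that you phrase the failure condition via arbitrarily long finite matched paths rather than a single infinite matched pair; these are equivalent in a finite graph.
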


\begin{proof}
First of all, by its definition in Definition~\ref{def:NC}, the unsuccessful answer to an $NC_1$ instance is equivalent to the existence of two infinite paths $\rho\in Path(\M(N_0))$ and $\rho'\in O_x^{-1}(O_x(\rho))$ such that $ind(\rho')\neq ind(\rho)$. In the following, we give an algorithm to check such an existence. 

Recall that $x$ is the new agent. Let $\M(N)=(S_N^\dagger,I_N^\dagger,T_N^\dagger,\pi_N^\dagger)$ be the Kripke structure obtained by applying the normative system $N\in \Psi$ on the system $M$. We define the function $O_x^\dagger$ over the states $\bigcup_{N\in\Psi}S_{N}^\dagger $ by letting $O_x^\dagger((s,q))=O_x(s)$ for all $(s,q)\in S_N^\dagger$. Moreover, we extend  the $ind$ function to work with the states in $\bigcup_{N\in\Psi}S_{N}^\dagger $ by letting $ind((s,q))=ind(N)$ for $q$ being a normative state of $N$. 

A product system is constructed by synchronising the behaviour of two Kripke structures $\M(N_0)$ and $\M(N) $ for $N\in \Psi\setminus \{N_0\}$ such that the observations are always the same. Let $Prop'=\{safe\}$ be the set of atomic propositions.
Formally, it is the structure $M'=(S',I',T',\pi')$ such that 
\begin{itemize}
\item $S'=S_{N_0}^\dagger\times \bigcup_{N\in\Psi, N\neq N_0}S_{N}^\dagger$, 
\item $(s,t)\in I'$ if $s\in I_{N_0}^\dagger$ and $t\in \bigcup_{N\in\Psi, N\neq N_0}I_{N}^\dagger$ such that $O_x^\dagger(s)=O_x^\dagger(t)$. 
\item $((s,t),(s',t')) \in T'$ if and only if $(s,s')\in T_{N_0}^\dagger$, $(t,t')\in T_{N}^\dagger$ for some $N\in\Psi$ and $N\neq N_0$, and $O_x^\dagger(t)=O_x^\dagger(t')$ and 
\item $safe\in \pi((s,t))$ iff $ind(s)\neq ind(t)$. 
\end{itemize}
Intuitively, the structure consists of two components: one component moves according to the Kripke structure $\M(N_0)$ (that is, the currently active normative system), and the other moves according to other Kripke structures $\M(N)$ by matching the observations of the new agent $x$. 
Therefore, we have the equivalence of the following two statements: 
\begin{itemize}
\item the existence of two infinite paths $\rho\in Path(\M(N_0))$ and $\rho'\in O_x^{-1}(O_x(\rho))$ such that $ind(\rho')\neq ind(\rho)$; 
\item the existence of an infinite path in the structure $M'$ such that all states on the path are labelled with $safe$. 
\end{itemize}
Then, the existence of an infinite path where all states are labelled with an atomic proposition can be reduced to 1) the removal of all states (and their related transitions) not labelled with the atomic proposition and then 2) the checking of reachable strongly connected components (SCCs). 

For the complexity, we notice that $M'$ is polynomial over $M$ and $\Psi$, and the checking of reachable SCCs can be done in PTIME by the Tarjan's algorithm~\cite{Tarjan1972}.  
\end{proof}


\begin{theorem}\label{thm:NC_2}
The $NC_2$ problem is PSPACE-complete, with respect to the sizes of the system and the set $\Psi$. 
\end{theorem}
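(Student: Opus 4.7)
The plan is to prove the two bounds separately. For the PSPACE upper bound I would lift the product construction of the proof of Theorem~\ref{thm:NC_1} by interposing a \emph{subset construction} that tracks all ``companion'' states in the other Kripke structures that remain consistent with the new agent's observations so far. Explicitly, consider the (implicitly defined) graph whose nodes are pairs $(s, B)$ with $s \in S_{N_0}^\dagger$ and $B \subseteq \bigcup_{N \in \Psi \setminus \{N_0\}} S_N^\dagger$; initial nodes have $s \in I_{N_0}^\dagger$ and $B = \{t \in \bigcup_{N \neq N_0} I_N^\dagger : O_x^\dagger(t) = O_x^\dagger(s)\}$; a transition advances $s$ along $T_{N_0}^\dagger$ and sets $B'$ to the collection of $T_N^\dagger$-successors of elements of $B$ whose observation equals $O_x^\dagger(s')$. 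The key claim is that $NC_2$ has a positive answer iff some node $(s, \emptyset)$ is reachable. The forward direction uses the seriality of $T^\dagger$ to extend the finite prefix of $\rho$ witnessing the empty $B$ to an infinite path along which $B$ stays empty, ruling out any companion $\rho'$; the reverse direction is a K\"onig-lemma argument on the finitely branching tree of finite companion runs in the other Kripke structures, which has a node at every depth exactly when $B$ never empties, hence contains an infinite branch matching $\rho$. Since each $(s, B)$ has polynomial-size representation, nondeterministic reachability (guessing only the next node together with a polynomial step counter) runs in polynomial space, and Savitch's theorem gives PSPACE.

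For the matching lower bound I would reduce from the \emph{non-universality} problem for nondeterministic finite automata---given an NFA $A = (Q, \Sigma, \delta_A, q_0^A, F)$, decide whether $L(A) \neq \Sigma^*$---which is classically PSPACE-complete. The idea is to encode $A$ inside a dynamic normative system $N_1$ while letting an ``adversary'' normative system $N_0$ be arranged so that the observation language of $\M(N_0)$ is exactly $\{w \cdot \sharp^\omega : w \in \Sigma^*\}$, where $\sharp$ is a fresh observation symbol signalling ``the input word has been committed''. The structure $\M(N_1)$ would be arranged so that $w \cdot \sharp^\omega$ is producible exactly when $A$ has an accepting run on $w$ and then enters a $\sharp$-emitting sink. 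The observation function $O_x$ returns the symbol emitted at the current step. Then $NC_2$ succeeds for $N_0$ iff some $w \in \Sigma^* \setminus L(A)$ exists, i.e.\ iff $L(A) \neq \Sigma^*$. A single new agent together with padding self-loops at a sink state is enough to encode the states, transitions, and acceptance test of $A$ while respecting the structural constraints on $M$, $\delta_n$, and $\delta_u$.

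The main obstacle will be the hardness side. Packaging the NFA inside the dynamic normative system formalism requires simultaneously satisfying every structural constraint (seriality of $T$, nonempty action sets under $\delta_n$, observations consistent with $L_i$, and a clean separation between the indices attached to $N_0$ and $N_1$). Once the standard padding and sink gadgets are in place the correspondence between a word $w \notin L(A)$ and a separating observation sequence $w \cdot \sharp^\omega$ for $NC_2$ is immediate, so the remaining work is essentially a careful bookkeeping exercise. The upper bound, by contrast, is more conceptual but technically light once the subset construction and the König's-lemma characterisation are in place.
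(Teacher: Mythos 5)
Your proposal is correct and follows essentially the same route as the paper's proof: the upper bound is the same on-the-fly subset construction pairing a run of $\M(N_0)$ with the set of observation-consistent companion states in the other structures (your condition $B=\emptyset$ is exactly the paper's $goal$ predicate), and the lower bound is the same reduction between NFA (non-)universality and $NC_2$, with $\M(N_0)$ generating $\Sigma^*$ and $\M(N_1)$ simulating the automaton. The only differences are cosmetic refinements on your side (the explicit K\"onig-lemma correctness argument and the $\sharp$-sink gadget handling the NFA's accepting states, which the paper's construction glosses over).
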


\begin{proof}

We first show the upper bound: {\bf PSPACE Membership} 

The upper bound is obtained by having a nondeterministic algorithm which takes a polynomial size of space, i.e., it is in NPSPACE=PSPACE. 

First of all, by its definition in Definition~\ref{def:NC}, the successful answer to an $NC_2$ instance is equivalent to the existence of a finite paths $\rho\in Path(\M(N_0))$ such that  all paths $\rho'\in O_x^{-1}(O_x(\rho))$ of the same observation belongs to $\M(N_0)$, i.e., $ind(\rho')= ind(\rho)$. The idea of our algorithm is as follows. It starts by guessing a set of initial states of the structures $\{\M(N)~|~N\in\Psi\}$ on which agent $x$ has the same observation. It then continuously guesses the next set of states such that they are reachable in one step from some state in the current set and on which agent $x$ has the same observation. If this guess can be done infinitely then the $NC_2$ problem is successful. 
This infinite number of guesses can be achieved with a finite number of guesses, by adapting the approach of LTL model checking~\cite{clarkebook}.

Let $Prop''=\{goal\}$ be the set of atomic propositions. We define the structure $M''=(S'',I'',T'',\pi'')$ such that 
\begin{itemize}
\item $S''=S_{N_0}^\dagger\times \powerset{\bigcup_{N\in\Psi}S_{N}^\dagger}$, 
\item $(s,P)\in I''$ if $s\in I_{N_0}^\dagger$ and $P\subseteq \bigcup_{N\in\Psi}I_{N}^\dagger$ such that $t\in P$ iff $O_x^\dagger(s)=O_x^\dagger(t)$, 
\item $((s,P),(t,Q)) \in T''$ if and only if $(s,t)\in T_{N_0}^\dagger$ and $Q=\{t'~|~\exists s'\in P\exists N\in \Psi: (s',t')\in T_{N}^\dagger, O_x^\dagger(t')=O_x^\dagger(t)\}$, and
\item $goal\in \pi((s,P))$ iff for all states $s\in P$ we have $ind(s)=ind(N_0)$. 
\end{itemize}
Intuitively, each path of the structure $M''$ represents a path in $K(N_0)$ (in the first component) together with the set of paths with the same observation for agent $x$ (in the second component). Therefore, we can have the equivalence of the following two statements: 
\begin{itemize}
\item the existence of a finite paths $\rho\in Path(\M(N_0))$ such that  all paths $\rho'\in O_x^{-1}(O_x(\rho))$ of the same observation belongs to $\M(N_0)$, i.e., $ind(\rho')= ind(\rho)$; 
\item in structure $M''$, the existence of an initial state such that it can reach some state satisfying $goal$. 
\end{itemize}

 For the complexity of the algorithm, we note that although the system $M''$ is of exponential size,  the reachability can be done on-the-fly by using a polynomial size of space. 

We then show the lower bound: {\bf PSPACE Hardness}

\newcommand{\loopstate}{{loop}}

It is obtained by a reduction from the problem of deciding if, for a given nondeterministic finite state automaton $A$ over an alphabet $\Sigma$, the language $L(A)$ is  equivalent to the universal language $\Sigma^*$. Let $A=(Q,q_0,\delta,F)$ be an NFA such that $Q$ is a set of states, $q_0\in Q$ is an initial state, $\delta: Q\times \Sigma\rightarrow \powerset{Q}$ is a transition function, and $F\subseteq Q$ is a set of final states. We construct a system $M(A)$ which consists of two subsystems, one of them simulates the behaviour of $A$ and the other simulates the behaviour of the language $\Sigma^*$. The subsystems are reachable from an initial state $s_0$ by two actions $a_1$ and $a_2$ respectively. Let $\Sigma_1=\Sigma\cup\{\bot\}$ such that $\bot\notin\Sigma$ is a  symbol. Formally, we have that $M(A)=(S,\{Act_i\}_{i\in\Ags},\{L_i\}_{i\in\Ags},\{O_i\}_{i\in\Ags},I,T,\pi)$ is a single-agent system such that 
\begin{itemize}
\item $\Ags = \{x\}$,
\item $S=S^1\cup S^2\cup \{s_0,s_\loopstate\}$ with $S^1=\Sigma_1\times Q$ and $S^2=\{s_a~|~a\in \Sigma_1\}$, 
\item $Act_x=\Sigma\cup \{a_1,a_2\}$,
\item $L_x(s_0)=\{a_1,a_2\}$ and $L_x(s)=\Sigma$ for $s\in S\setminus \{s_0\}$,
\item $O_x(s_0)=O_x(s_\loopstate)=\bot$,  $O_x((a,q))=a$ for $(a,q)\in \Sigma_1\times Q$, and $O_x(s_a)=a$ for $a\in\Sigma_1$, where $\mathcal{O}=\Sigma_1$, 
\item $I(s_0)=1$, 
\item the transition relation $T$ consists of the following five sets of transitions: 
\begin{itemize} 
\item $\{(s_0,a_1,(\bot,q_0)),(s_0,a_2,s_\bot)\}$; intuitively, from the initial state $s_0$, it can transit into the subsystem $M_1(A)$ by taking action $a_1$ or the subsystem $M_2(A)$ by taking action $a_2$,
\item $\{((a,q),a_1,(a_1,q_1))~|~q,q_1\in Q, a\in\Sigma_1, a_1\in \Sigma, q_1\in \delta(q,a_1)\}$; intuitively, the subsystem $M_1(A)$ follows the behaviour of the automaton $A$,
\item $ \{((a,q),a_1,s_\loopstate)~|~q\in Q, a,a_1\in \Sigma, \delta(q,a_1)=\emptyset \}$; intuitively, in $M_1(A)$, all illegal actions take the system state into a designated state $s_\loopstate$,
\item $\{(s_\loopstate,a,s_\loopstate)~|~a\in \Sigma\}$; intuitively, the state $s_\loopstate$ is a loop state for all actions,  
\item $ \{s_a,a_1,s_{a_1})~|~a\in\Sigma_1, a_1\in \Sigma\}$; intuitively, the subsystem $M_2(A)$ simulates the language $\Sigma^*$, and 
\end{itemize}
\item $\pi$ will not be used. 
\end{itemize}
The new agent $x$ is the only agent of the system. 
On the system $M(A)$, we have two normative systems whose only difference is on the state $s_0$: $N_0$ disallows action $a_1$ and $N_1$ disallows action $a_2$.  
Formally, $N_0=(\{t_0\},\delta_n^0,\delta_u^0,t_0)$ such that  
\begin{itemize}
\item $\delta_n^0(s_0,t_0)=\{a_1\}$, $\delta_n^0(s,t_0)=\emptyset$ for all $s\in S\setminus \{s_0\}$, and 
\item $\delta_u^0(t_0,s)=t_0$ for all $s\in S$. 
\end{itemize}
and $N_1=(\{t_1\},\delta_n^1,\delta_u^1,t_1)$ such that  
\begin{itemize}
\item $\delta_n^1(s_0,t_1)=\{a_2\}$, $\delta_n^1(s,t_1)=\emptyset$ for all $s\in S\setminus \{s_0\}$, and 
\item $\delta_u^1(t_1,s)=t_0$ for all $s\in S$. 
\end{itemize}

Now we show that the universality of the NFA $A$ is equivalent to the unsuccessful answer to the $NC_2$ problem on $M(A)$ and $\Psi=\{N_0,N_1\}$. 

($\Rightarrow$) Assume that the automaton $A$ is universal. Then for all paths $\rho\in \M(N_0)$, there exists another path $\rho'\in\M(N_1)$ such that $O_x(\rho)=O_x(\rho')$. The inverse statement of the latter is that, there exists a finite path $\rho\in \M(N_0)$ such that there exists no $\rho'\in\M(N_1)$ such that $O_x(\rho)=O_x(\rho')$. The latter means that, all paths $\rho'$ with $O_x(\rho)=O_x(\rho')$ are in $\M(N_0)$, which is the statement of $NC_2$ problem.  

($\Leftarrow$) Assume that we have  the unsuccessful answer to the $NC_2$ problem on $M(A)$ and $\Psi=\{N_0,N_1\}$. Then by the definition, it means that for all paths $\rho\in \M(N_0)$, there exists another path $\rho'\in\M(N_1)$ such that $O_x(\rho)=O_x(\rho')$. The latter is equivalent to the fact that the automaton $A$ is universal. 

\end{proof}

\section{Related Work}

Normative multiagent systems have attracted many research interests in recent years, see e.g., \cite{BvdTV2006,CAB2011} for  comprehensive reviews of the area. Here we can only review some closely related work. 


\noindent {\bf Norm synthesis for static normative systems.} As stated, 
most current formalisms of normative systems are static. \cite{ST1995} shows that this norm synthesis problem is NP-complete. \cite{CR2009} proposes a  norm synthesis algorithm in declarative planning domains for reachability objectives, and \cite{MSRWV2013} considers the on-line synthesis of norms. \cite{BD2011} considers the norm synthesis problem by conditioning over agents' preferences, expresses as pairs of LTL formula and utility, and a normative behaviour function. 


\noindent{\bf Changes of normative system.}
\cite{KDM2014} represents the norms as a set of atomic propositions and then employs a language to specify the update of norms. Although the updates are parameterised over actions, no considerations are taken to investigate, by either verification or norm synthesis, whether the normative system can be imposed to coordinate agents' behaviour to secure the objectives of the system. 


\noindent{\bf Norm recognition.}
Norm recognition can be related to the norm learning problem, which employs various approaches, such as data mining \cite{SCPP2013} and sampling and parsing \cite{OM2013,CSMO2015}, for the agent to learn social norms by observing other agents' behaviour. On the other hand, our norm recognition problems are based on formal verification, aiming to decide whether the agents are designed well so that they can recognise the current normative system from a set of possible ones. We also study the complexity of them. 


\noindent{\bf Application of social norms}
%
%
Social norms are to regulate the behaviour of the stakeholders in a system, including sociotechnical system~\cite{CS2016} which has both humans and computers. They are used to represent the commitments (by e.g., business contracts, etc) between humans and organisations. The dynamic norms of this paper can be useful to model more realistic scenarios in which commitments may be changed with the environmental changes.  

\section{Conclusions}

In the paper, we first present a novel definition of normative systems, by arguing with an example that it can be a necessity to have multiple normative states. We study the complexity of two autonomy issues related to normative systems. The decidability  (precisely, EXPTIME-complete) of norm synthesis is an encouraging result, suggesting that the maximum number of normative states is bounded for CTL objectives. For the two norm recognition subproblems, one of them is, surprisingly, in PTIME and the other is PSPACE-complete. Because the first one suggests a better level of autonomy,
to see if an agent can recognise the social norms, we can deploy a PTIME algorithm first. 
If it fails, we may apply a PSPACE algorithm to check the weaker autonomy. 

\section*{Acknowledgements}

A conference version of this work is accepted by the 25th International Joint Conference on Artificial Intelligence (IJCAI-16). We thank the anonymous reviewers from the IJCAI-16 for their helpful comments and suggestions. 


\begin{thebibliography}{10}
\expandafter\ifx\csname url\endcsname\relax
  \def\url#1{\texttt{#1}}\fi
\expandafter\ifx\csname urlprefix\endcsname\relax\def\urlprefix{URL }\fi
\expandafter\ifx\csname href\endcsname\relax
  \def\href#1#2{#2} \def\path#1{#1}\fi

\bibitem{FDW2013}
M.~Fisher, L.~Dennis, M.~Webster, Verifying autonomous systems, Communications
  of the ACM 56~(9) (2013) 84--93.

\bibitem{ST1992}
Y.~Shoham, M.~Tennenholtz, On the synthesis of useful social laws for
  artificial agent societies, in: AAAI 1992, 1992, pp. 276--281.

\bibitem{vdHRW2007}
W.~van~der Hoek, M.~Roberts, M.~Wooldridge, Social laws in alternating time:
  effectiveness, feasibility, and synthesis, Synthese 156~(1) (2007) 1--19.

\bibitem{AW2010}
T.~{\AA}gotnes, M.~Wooldridge, Optimal social laws, in: AAMAS 2010, 2010, pp.
  667--674.

\bibitem{WvdH2005}
M.~Wooldridge, W.~van~der Hoek, On obligations and normative ability: Towards a
  logical analysis of the social contract, Journal of Applied Logic 3 (2005)
  396--420.

\bibitem{AvdHW2007}
T.~{\AA}gotnes, W.~van~der Hoek, M.~Wooldridge, Normative system games, in:
  AAMAS 2007, 2007.

\bibitem{CR2009}
G.~Christelis, M.~Rovatsos, Automated norm synthesis in an agent-based planning
  environment, in: AAMAS 2009, 2009, pp. 161--168.

\bibitem{MSRWV2013}
J.~Morales, M.~Lopez-Sanchez, J.~A. Rodriguez-Aguilar, M.~Wooldridge,
  W.~Vasconcelos, Automated synthesis of normative systems, in: AAMAS 2013,
  2013, pp. 483--490.

\bibitem{FHMVbook}
R.~Fagin, J.~Halpern, Y.~Moses, M.~Vardi, Reasoning About Knowledge, MIT Press,
  1995.

\bibitem{ST1995}
Y.~Shoham, M.~Tennenholtz, On social laws for artificial agent societies:
  off-line design, Artificial Intelligence 73~(1-2) (1995) 231--252.

\bibitem{CGKLOPRT2001}
H.~Chalupsky, Y.~Gil, C.~A. Knoblock, K.~Lerman, J.~Oh, D.~V. Pynadath, T.~A.
  Russ, M.~Tambe, Electric elves: Applying agent technology to support human
  organizations, in: IAAI 2001, 2001, pp. 51--58.

\bibitem{BvdTV2006}
G.~Boella, L.~van~der Torre, H.~Verhagen, Introduction to normative multiagent
  systems, Computational and Mathematical Organization Theory 12~(2-3) (2006)
  71--79.

\bibitem{clarkebook}
E.~M. Clarke, O.~Grumberg, D.~Peled, Model Checking, The MIT Press, 1999.

\bibitem{KV1996}
O.~Kupferman, M.~Y. Vardi, Module checking, in: 8th International Conference on
  Computer Aided Verification (CAV1996), 1996, pp. 75--86.

\bibitem{VW1986}
M.~Y. Vardi, P.~Wolper, Automata-theoretic techniques for modal logics of
  programs, J. Comput. Syst. Sci. 32~(2) (1986) 183--221.

\bibitem{CKS1980}
A.~K. Chandra, D.~C. Kozen, L.~J. Stockmeyer, Alternation, Journal of the ACM
  28~(1) (1980) 114--133.

\bibitem{Tarjan1972}
R.~E. Tarjan, Depth-first search and linear graph algorithms, SIAM Journal on
  Computing 1~(2) (1972) 146--160.

\bibitem{CAB2011}
N.~Criado, E.~Argente, V.~Botti, Open issues for normative multi-agent systems,
  AI Communications.

\bibitem{BD2011}
N.~Bulling, M.~Dastani, Verifying normative behaviour via normative mechanism
  design, in: IJCAI 2011, 2011, pp. 103--108.

\bibitem{KDM2014}
M.~Knobbout, M.~Dastani, J.-J.~C. Meyer, Reasoning about dynamic normative
  systems, in: JELIA 2014, 2014, pp. 628--636.

\bibitem{SCPP2013}
B.~T.~R. Savarimuthu, S.~Cranefield, M.~A. Purvis, M.~K. Purvis, Identifying
  prohibition norms in agent societies, Artificial intelligence and law, 21~(1)
  (2013) 1--46.

\bibitem{OM2013}
N.~Oren, F.~Meneguzzi, Norm identification through plan recognition, in: COIN
  2013@AAMAS, 2013.

\bibitem{CSMO2015}
S.~Cranefield, T.~Savarimuthu, F.~Meneguzzi, N.~Oren, A bayesian approach to
  norm identification (extended abstract), in: AAMAS 2015, 2015, pp.
  1743--1744.

\bibitem{CS2016}
A.~K. Chopra, M.~P. Singh, From social machines to social protocols: Software
  engineering foundations for sociotechnical systems, in: WWW 2016, 2016, pp.
  903--914.

\end{thebibliography}

\commentout{

\newpage

\appendix

\section{Proofs of Theorem~\ref{thm:synthesis}}

\section{Proofs of Theorem~\ref{thm:NC_1}}

\section{Proofs of Theorem~\ref{thm:NC_2}}

}

\end{document}